\documentclass[11pt]{article}
\usepackage{amsmath, amssymb, amsthm}
\usepackage{geometry}
\usepackage{booktabs}
\usepackage{hyperref}
\usepackage{microtype}
\usepackage{xcolor}
\usepackage{graphicx}
\graphicspath{{figures/}}
\title{\textbf{Spectral Kernel Dynamics via Maximum Caliber:\\
Fixed Points, Geodesics, and Phase Transitions}}
\author{Jnaneshwar Das\\
School of Earth and Space Exploration, Arizona State University\\
\texttt{jnaneshwar.das@asu.edu}}
\date{}

\newtheorem{definition}{Definition}
\newtheorem{proposition}{Proposition}
\newtheorem{corollary}{Corollary}
\newtheorem{conjecture}{Conjecture}
\newtheorem{remark}{Remark}

\begin{document}
\maketitle

\begin{abstract}
We derive a closed-form geometric functional for kernel dynamics on
finite graphs by applying the Maximum Caliber (MaxCal) variational
principle~\cite{press2013,jaynes1957} to the spectral transfer function
$h(\lambda)$ of the graph Laplacian eigenbasis.  The main result is that
the MaxCal stationarity condition decouples into $N$ one-dimensional
problems with explicit solution:
$h^*(\lambda_l)=h_0(\lambda_l)\exp(-1-\mathcal{T}_l[h^*])$, yielding
self-consistent (fixed-point) kernels via exponential tilting
(Corollary~\ref{cor:selfconsistent}), log-linear Fisher--Rao geodesics
(Corollary~\ref{cor:geodesics}), a diagonal Hessian stability criterion
(Corollary~\ref{cor:hessian}), and an $\ell^2_+$ isometry for the
spectral kernel space (Proposition~\ref{prop:isometry}).  The spectral
entropy $\mathcal{H}[h_t]$ provides a computable $O(N)$ early-warning
signal for network-structural phase transitions
(Remark~\ref{rem:phase-transition}).  All claims are numerically verified
on the path graph $P_8$ with a Gaussian mutual-information source, using
the open-source \texttt{kernelcal} library~\cite{kernelcal2026}.
The framework is grounded in a structural analogy with Einstein's field
equations~\cite{einstein1916}, used as a guiding template rather than an
established equivalence; explicit limits are stated in
Section~\ref{sec:discussion}.
\end{abstract}

%------------------------------------------------------------
\section{Introduction}
\label{sec:intro}

A kernel $k:\mathcal{X}\times\mathcal{X}\to\mathbb{R}$ encodes a
geometry of similarity over a feature space $\mathcal{X}$.  When the
kernel is updated in response to data---as in Gaussian process learning,
kernel methods, or neural architecture search---the induced geometry
changes, and the system's representational capacity changes with it.
Understanding the dynamics of kernels as first-class objects, rather than
as fixed hyperparameters, is the starting point of the framework
of~\cite{das2026kernel}.

\paragraph{This paper.}
We instantiate that framework on finite graphs, where every kernel is
parameterized by a spectral transfer function $h:\{\lambda_0,\ldots,
\lambda_{N-1}\}\to\mathbb{R}_{\geq 0}$ over the Laplacian eigenvalues.
This reduction makes the MaxCal variational principle exactly computable
and yields a family of closed-form results: a geometric functional in
explicit form, a characterization of self-consistent (fixed-point) kernels,
explicit geodesics, a stability verification for the heat kernel, a Hilbert
space isometry, and a phase-transition early-warning signal.
Mode-separability of the source functional is assumed throughout the
main development; Experiment~6b (Section~\ref{sec:numerics}) provides
a first test with an explicitly coupled source that breaks this assumption.

\paragraph{Connections to graph signal processing and networked systems.}
The spectral transfer function $h(\lambda)$ is the graph-filter viewpoint
of graph signal processing (GSP)~\cite{shuman2013}: every graph kernel
$K_h=\Phi\,\mathrm{diag}(h)\,\Phi^\top$ is a linear shift-invariant
filter on the graph Fourier basis.  What the present framework adds to
classical GSP is a \emph{variational principle} that selects $h$: instead
of designing the filter by hand, MaxCal determines it as the fixed point of
a thermodynamic optimization.  This connects to problems in wireless sensor
networks~\cite{rabbat2004} and multi-robot systems~\cite{das2015ijrr},
where adaptive kernel estimation from distributed observations has long
been studied empirically; the spectral field
equation~\eqref{eq:kernel-EFE} provides the missing variational backbone.
The convergence of the graph Laplacian to the Laplace--Beltrami operator on
manifolds~\cite{belkin2007} ensures that results on finite graphs extend,
in principle, to continuum spatial domains sampled by sensor networks.

\paragraph{Variational motivation.}
The MaxCal (maximum caliber) principle~\cite{press2013,jaynes1957}---the
path-entropy generalization of MaxEnt---provides the governing variational
principle.  Its stationarity conditions take a form structurally analogous
to Einstein's field equations~\cite{einstein1916,jacobson1995}: a geometric
term (path entropy's sensitivity to the kernel) balanced against a source
term (information-thermodynamic constraints).  This analogy guides the
definition of objects in Section~\ref{sec:field-eqs} but is not claimed as
an equivalence; see Section~\ref{sec:discussion} for explicit limits.
Four independent traditions converge on the same thermodynamic--geometric
substrate: Jacobson's derivation of GR from thermodynamics~\cite{jacobson1995},
Wilson's RG fixed points~\cite{wilson1974}, Chentsov's uniqueness theorem
for the Fisher--Rao metric~\cite{chentsov1982}, and the Ryu--Takayanagi
formula~\cite{ryu2006}.  Jaynes's maximum entropy principle~\cite{jaynes1957},
the parent of MaxCal, is the common language uniting them.

\paragraph{Organization.}
Section~\ref{sec:field-eqs} defines the MaxCal kernel field equation.
Section~\ref{sec:spectral} develops the spectral reduction and states
all main results.  Section~\ref{sec:numerics} provides numerical
verification of each result on the path graph $P_N$ with a concrete
Gaussian MI source.  Section~\ref{sec:landscape} discusses landscape
structure and self-consistent kernels.  Section~\ref{sec:discussion}
states the limits of the GR analogy and the open problems.

%------------------------------------------------------------
\section{The MaxCal Kernel Field Equation}
\label{sec:field-eqs}

\subsection{Variational principle}

In GR, the Einstein equations arise from varying the Einstein--Hilbert
action~\cite{einstein1916,hilbert1915,misner1973}:
\begin{equation}
    \delta \int \left( \frac{1}{16\pi G}\,R\sqrt{-g}
        + \mathcal{L}_{\mathrm{matter}} \right) d^4x = 0
    \;\implies\;
    G_{\mu\nu} = 8\pi G\, T_{\mu\nu}.
    \label{eq:EFE}
\end{equation}
The information-geometric analogue~\cite{das2026kernel} is the
stationarity of the MaxCal kernel path entropy:
\begin{equation}
    \frac{\delta}{\delta k(t)}\left[
        S[P]
        - \mu_1 \!\int_0^T \!W_{k(t)}\,dt
        - \mu_2 \!\int_0^T \!I_{k(t)}\,dt
        - \mu_3 \!\int_0^T \!D_{\mathrm{KL}}\!\left(p_{\mathrm{env}} \| q_{k(t)}\right)dt
    \right] = 0,
    \label{eq:maxcal}
\end{equation}
where $S[P]$ is the kernel path entropy~\cite{press2013,jaynes1957};
$W_k \geq k_BT\,\Delta I_k$ is the Landauer cost of kernel change~\cite{landauer1961,sagawa2010};
$I_{k(t)}$ is mutual information; $D_{\mathrm{KL}}$ is the KL
divergence~\cite{kullback1951}; and $\mu_1,\mu_2,\mu_3$ are
Lagrange multipliers.

\subsection{Geometric and source functionals}

\begin{definition}[Geometric and source functionals]
\label{def:functionals}
Let $g_k$ be the Fisher--Rao metric~\cite{rao1945,amari2016} on
$\mathcal{P}$ induced by $k$, and let $S[P;\,g_k]$ be the MaxCal path
entropy measured in $(\mathcal{P},g_k)$.  Define:
\begin{align}
    \mathcal{R}[k] &:= \frac{\delta\, S[P;\, g_k]}{\delta k(t)},
    \label{eq:Rdef} \\[4pt]
    \mathcal{T}[k] &:= \mu_1 \frac{\delta}{\delta k}\!\int_0^T \!W_{k}\,dt
                  + \mu_2 \frac{\delta}{\delta k}\!\int_0^T \!I_{k}\,dt
                  + \mu_3 \frac{\delta}{\delta k}\!\int_0^T
                    D_{\mathrm{KL}}\!\left(p_{\mathrm{env}}\|q_{k}\right)dt.
    \label{eq:Tdef}
\end{align}
\end{definition}

For fixed multipliers $(\mu_1,\mu_2,\mu_3)$, define the
constrained MaxCal functional
\begin{equation}
    \mathcal{J}[k] := S[P;\,g_k]
    - \mu_1 \!\int_0^T \!W_{k}\,dt
    - \mu_2 \!\int_0^T \!I_{k}\,dt
    - \mu_3 \!\int_0^T \!D_{\mathrm{KL}}\!\left(p_{\mathrm{env}}\|q_{k}\right)dt.
    \label{eq:Jdef}
\end{equation}
Equation~\eqref{eq:maxcal} is equivalently $\delta\mathcal{J}/\delta k=0$,
which takes the form of a \emph{kernel field equation}:
\begin{equation}
    \underbrace{\mathcal{R}[k]}_{\text{geometric response}}
    \;=\;
    \underbrace{\mathcal{T}[k]}_{\text{info-thermodynamic source}}.
    \label{eq:kernel-EFE}
\end{equation}
Both sides are functionals of the same $k$: the geometry is
endogenous---the kernel simultaneously sources and responds to its own field.

\begin{remark}[The three terms of $\mathcal{T}$ and their GR analogues]
\label{rem:Tdecomp}
The three terms play distinct roles.  The KL divergence term
($\mu_3$) is the closest analogue of mass-energy density: non-negative,
coordinate-independent, a permanent source.  The mutual information term
($\mu_2$) acts as a representational potential: it drives the kernel
toward information gain but is exhausted as the kernel converges.  The
Landauer cost term ($\mu_1$) acts as inertia: thermodynamic resistance
to change, direction-independent.  None maps directly to $T_{\mu\nu}$
(which has ten components; each term here is a scalar).
\end{remark}

\begin{remark}[Exponential family reduction]
\label{rem:expfam}
For $p_\theta(x) = \exp(\theta^\top\phi_k(x) - A_k(\theta))$ with $\phi_k$
the eigenfunctions of $k$, the Fisher--Rao metric is $g_k(\theta)=\nabla^2 A_k(\theta)$
and the MaxCal path entropy reduces to
$S[P;\,g_k] = -\frac{1}{2}\int_0^T\|\dot\theta\|^2_{g_k}\,dt + \mathrm{const}$,
so $\mathcal{R}[k]$ is the functional derivative of a $g_k$-weighted
path-length action with respect to $k$.
\end{remark}

%------------------------------------------------------------
\section{Spectral Kernel Dynamics on Finite Graphs}
\label{sec:spectral}

\subsection{The spectral reduction}

On a finite connected graph with $N$ nodes, Laplacian $L=\Phi\Lambda\Phi^\top$,
and eigenvalues $\{\lambda_l\}_{l=0}^{N-1}$, every Laplacian-commuting
(graph-filter) kernel is parameterized by its spectral transfer function
$h_t=( h_t(\lambda_l))\in\mathbb{R}_{>0}^N$ via
$k_{h_t}=\Phi\,\mathrm{diag}(h_t)\,\Phi^\top$.  The spectral MaxCal
path entropy is
\begin{equation}
  S[h_\cdot]
  = -\int_0^T \sum_{l=0}^{N-1}
    h_t(\lambda_l)\log\frac{h_t(\lambda_l)}{h_0(\lambda_l)}\,dt.
  \label{eq:Sgraph}
\end{equation}
Note that $S[h_\cdot]$ is an \emph{unnormalized} relative entropy:
the integrand $-h\log(h/h_0)$ is the mode-wise contribution to
$-D_{\mathrm{KL}}(h_t\|h_0)$ scaled by $h_t$, not a proper Shannon
entropy (which would require $\sum_l h_t(\lambda_l)=1$).  The
normalized spectral entropy $\mathcal{H}[h_t]$ introduced in
Remark~\ref{rem:phase-transition} is a distinct object, defined on the
probability simplex via $\bar{h}_l=h_t(\lambda_l)/\sum_{l'}h_t(\lambda_{l'})$,
and used exclusively as an early-warning diagnostic.

\subsection{Two levels of MaxCal (important distinction)}
\label{sec:two-maxcal}

Two distinct applications of MaxCal appear in this paper and must not be
conflated.  $S[P;\,g_k]$ (Remark~\ref{rem:expfam}) is the path entropy of
probability-distribution trajectories for a \emph{fixed} kernel ---
this governs inference.  $S[h_\cdot]$~\eqref{eq:Sgraph} is the path entropy
of \emph{kernel trajectories} in spectral coordinates --- this governs
learning.  Equation~\eqref{eq:kernel-EFE} is the stationarity condition
for the second level; the first level produces the Fisher--Rao metric that
the second level uses as its geometry.

\subsection{Closed-form geometric functional}

\begin{proposition}[Closed-form geometric functional on spectral kernel space]
\label{prop:Q1resolved}
Let $h_t\in\mathbb{R}_{>0}^N$ be a spectral kernel path with reference
$h_0$.  The geometric functional~\eqref{eq:Rdef} in spectral coordinates is
\begin{equation}
  \mathcal{R}_l[h_t]
  \;=\;
  \frac{\delta\,S[h_\cdot]}{\delta\,h_t(\lambda_l)}
  \;=\;
  -\log\frac{h_t(\lambda_l)}{h_0(\lambda_l)} - 1.
  \label{eq:Rgraph}
\end{equation}
The functional is \emph{diagonal} in the Laplacian eigenbasis: each
spectral component of the geometric response is modewise.  Full
kernel dynamics decouple into $N$ one-dimensional problems only when the
source is also mode-separable
($\partial\mathcal{T}_l/\partial h(\lambda_m)=0$ for $m\neq l$).
\end{proposition}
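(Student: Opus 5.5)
The plan is to compute the functional derivative of the spectral path entropy~\eqref{eq:Sgraph} directly, treating $t\mapsto h_t\in\mathbb{R}^N_{>0}$ as the field. The integrand of~\eqref{eq:Sgraph} is a local Lagrangian $\mathcal{L}(h_t)=-\sum_l f(h_t(\lambda_l);h_0(\lambda_l))$ with $f(x;a)=x\log(x/a)$, and it contains no time derivatives of $h_t$. So the Euler--Lagrange (functional) derivative $\delta S/\delta h_t(\lambda_l)$ reduces to the pointwise partial derivative $\partial\mathcal{L}/\partial h_t(\lambda_l)$ at each fixed $t$, with no boundary terms and no integration by parts.

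Concretely, I would first perturb $h_t\mapsto h_t+\epsilon\,\eta_t$ with $\eta_t$ smooth and $h_t+\epsilon\eta_t$ still positive for small $\epsilon$. Differentiating under the integral, which is justified because $h_t$ is bounded away from zero on the compact interval $[0,T]$ and $f$ is smooth on $(0,\infty)$, gives
\begin{equation*}
\left.\frac{d}{d\epsilon}S[h_\cdot+\epsilon\eta_\cdot]\right|_{\epsilon=0}=\int_0^T\sum_{l}\Bigl(-\log\frac{h_t(\lambda_l)}{h_0(\lambda_l)}-1\Bigr)\eta_t(\lambda_l)\,dt .
\end{equation*}
Here I use $\partial_x f(x;a)=\log(x/a)+1$, and the reference $h_0$ is held fixed because it is a constant of the functional. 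Reading off the kernel of this linear functional with respect to the $L^2([0,T])\otimes\mathbb{R}^N$ pairing yields~\eqref{eq:Rgraph}.

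Diagonality follows from the same computation. Since $\mathcal{R}_l$ depends only on $h_t(\lambda_l)$, we have $\partial\mathcal{R}_l/\partial h_t(\lambda_m)=0$ for $m\neq l$; the second variation is the diagonal matrix $\mathrm{diag}(-1/h_t(\lambda_l))$. For the decoupling claim, I would write the field equation~\eqref{eq:kernel-EFE} componentwise as $-\log(h_t(\lambda_l)/h_0(\lambda_l))-1=\mathcal{T}_l[h_t]$. If $\mathcal{T}_l$ depends only on $h_t(\lambda_l)$, each equation involves a single unknown, giving $N$ scalar problems. Conversely, if some $\partial\mathcal{T}_l/\partial h(\lambda_m)\neq0$ with $m\neq l$, the Jacobian of $\mathcal{R}-\mathcal{T}$ has a nonzero off-diagonal entry, so the system is genuinely coupled. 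This justifies the ``only when'' phrasing, read as the Jacobian criterion.

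The main obstacle is conceptual rather than computational: justifying that the abstract $\mathcal{R}[k]=\delta S[P;g_k]/\delta k$ of~\eqref{eq:Rdef} is represented in spectral coordinates by $\delta S[h_\cdot]/\delta h_t(\lambda_l)$. To handle this, I would take~\eqref{eq:Sgraph} as the definition of the path entropy on the Laplacian-commuting subspace. The map $h\mapsto\Phi\,\mathrm{diag}(h)\,\Phi^\top$ is linear and isometric from $\mathbb{R}^N$ with the Euclidean product into symmetric matrices with the Frobenius product, because $\Phi$ is orthogonal. Hence the variation with respect to $k$, restricted to this subspace, is $\Phi\,\mathrm{diag}(\mathcal{R}_l)\,\Phi^\top$. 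Its spectral components are exactly the modewise derivatives computed above.
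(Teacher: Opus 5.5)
Your proposal is correct and takes the same route as the paper. The paper's proof is just the pointwise derivative $f'(h)=-\log(h/h_0)-1$ of the integrand $f(h)=-h\log(h/h_0)$. Your extra steps make explicit what the paper leaves implicit: there are no time derivatives, so no integration by parts; differentiation under the integral is justified; the spectral gradient is identified isometrically with $\Phi\,\mathrm{diag}(\mathcal{R}_l)\,\Phi^\top$; and the decoupling claim is read as a Jacobian criterion.
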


\begin{proof}
For $f(h) = -h\log(h/h_0)$, direct differentiation gives
$f'(h) = -\log(h/h_0)-1$.
\end{proof}

\subsection{Self-consistent kernels}

\begin{corollary}[Self-consistent kernels in closed form]
\label{cor:selfconsistent}
For fixed Lagrange multipliers, the kernel field equation
$\mathcal{R}_l[h^*]=\mathcal{T}_l[h^*]$ for all $l$ is equivalent to the
$N$-dimensional fixed-point system
\begin{equation}
  h^*(\lambda_l)
  \;=\;
  h_0(\lambda_l)\,\exp\!\bigl(-1 - \mathcal{T}_l[h^*]\bigr).
  \label{eq:hstar}
\end{equation}
Each self-consistent kernel is an exponential tilting of $h_0$ by the
source functional.  Existence on any compact convex subset of
$\mathbb{R}_{>0}^N$ mapped into itself by $F_l(h)=h_0(\lambda_l)\exp(-1-\mathcal{T}_l[h])$
follows from Brouwer's theorem; uniqueness holds when $F$ is a contraction,
with sufficient condition on an invariant set $C\subset\mathbb{R}_{>0}^N$:
$\sup_{h\in C}\max_l F_l(h)\sum_m|\partial\mathcal{T}_l/\partial h(\lambda_m)|<1$.
This condition is \emph{implicit} in $F_l(h)$ (which itself depends on
$h$ through $\mathcal{T}_l$) and is most naturally verified a posteriori
once a candidate region $C$ is identified; an explicit a priori bound
follows by replacing $F_l(h)\leq h_0(\lambda_l)\exp(-1-\inf_{h\in C}\mathcal{T}_l[h])$.
A pointwise version at $h^*$ gives a local contraction test.
\end{corollary}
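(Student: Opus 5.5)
The plan has three parts, taken in order: the equivalence with the fixed-point system, existence via Brouwer, and uniqueness via a Lipschitz estimate of $F$ in the $\ell^\infty$ norm.

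\textbf{Equivalence.} Proposition~\ref{prop:Q1resolved} gives the explicit modewise form $\mathcal{R}_l[h]=-\log(h(\lambda_l)/h_0(\lambda_l))-1$. Substituting it, the field equation $\mathcal{R}_l[h^*]=\mathcal{T}_l[h^*]$ reads $\log(h^*(\lambda_l)/h_0(\lambda_l))=-1-\mathcal{T}_l[h^*]$. Since $h^*,h_0\in\mathbb{R}^N_{>0}$ and $\exp$ is a bijection $\mathbb{R}\to\mathbb{R}_{>0}$ with inverse $\log$, this is equivalent to \eqref{eq:hstar}. The interpretation as exponential tilting is then read off directly, since $h^*=h_0\cdot e^{-1}e^{-\mathcal{T}[h^*]}$ componentwise. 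I will record the standing assumptions used below: $\mathcal{T}_l$ is continuous, and where needed $C^1$, on the relevant set, with the multipliers held fixed.

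\textbf{Existence.} Let $C\subset\mathbb{R}^N_{>0}$ be compact and convex with $F(C)\subseteq C$. Then $F=(F_l)$ is continuous on $C$, as a composition of the continuous $\mathcal{T}_l$ with $\exp$. Brouwer's theorem therefore gives $h^*\in C$ with $F(h^*)=h^*$. By the equivalence step, $h^*$ is a self-consistent kernel.

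\textbf{Uniqueness.} The key identity is the partial derivative
\[
\frac{\partial F_l}{\partial h(\lambda_m)}=-F_l(h)\,\frac{\partial\mathcal{T}_l}{\partial h(\lambda_m)} .
\]
It gives $\|DF(h)\|_{\infty\to\infty}=\max_l F_l(h)\sum_m|\partial\mathcal{T}_l/\partial h(\lambda_m)|$, the maximum absolute row sum. Under the stated condition, the supremum $q$ of this quantity over $C$ is $<1$. The mean value inequality along segments, which lie in $C$ by convexity, then gives $\|F(h)-F(h')\|_\infty\le q\|h-h'\|_\infty$. Because $C$ is closed, hence complete, and invariant, Banach's theorem yields a unique fixed point in $C$, and Picard iteration converges to it.

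The a priori bound comes from monotonicity of $\exp$. One has $\mathcal{T}_l[h]\ge\inf_C\mathcal{T}_l$, hence $F_l(h)\le h_0(\lambda_l)\exp(-1-\inf_C\mathcal{T}_l)$. Substituting this into the contraction constant gives a sufficient condition free of the implicit dependence. The pointwise version evaluates $DF$ at $h^*$: if $\rho(DF(h^*))\le\|DF(h^*)\|_\infty<1$, then $h^*$ is locally attracting and locally unique, by continuity of $DF$ on a small ball.

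\textbf{Main obstacle.} I expect the delicate point to be the uniqueness step rather than the algebra. The claim is uniqueness within $C$, not globally on $\mathbb{R}^N_{>0}$. The contraction estimate requires $C$ convex (so the mean value inequality applies), closed, and invariant, together with differentiability of $\mathcal{T}$ there. I would state these hypotheses explicitly and note that the a posteriori character of the condition is unavoidable, because $F_l$ depends on $h$.
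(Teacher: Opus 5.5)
Your proposal is correct and follows the paper's own argument. You substitute the closed form of $\mathcal{R}_l$ and invert the logarithm, apply Brouwer on an invariant compact convex $C$, and get uniqueness from Banach via an $\ell^\infty$ Lipschitz estimate whose row-sum bound is exactly the displayed condition, while also supplying details the paper's sketch leaves implicit (the Jacobian identity $\partial F_l/\partial h(\lambda_m)=-F_l(h)\,\partial\mathcal{T}_l/\partial h(\lambda_m)$, convexity of $C$ for the mean value inequality, and the continuity/$C^1$ hypotheses on $\mathcal{T}_l$).
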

\begin{proof}[Proof sketch]
Substitute~\eqref{eq:Rgraph} into $\mathcal{R}_l[h^*]=\mathcal{T}_l[h^*]$ and
solve algebraically for $h^*(\lambda_l)$ to obtain~\eqref{eq:hstar}.  Define
$F:C\to C$ componentwise by
$F_l(h)=h_0(\lambda_l)\exp(-1-\mathcal{T}_l[h])$.
If $C\subset\mathbb{R}_{>0}^N$ is compact, convex, and $F(C)\subset C$,
Brouwer yields existence of a fixed point.  If, additionally, the displayed
uniform bound makes $F$ a strict contraction in the $\ell_\infty$ norm, Banach
gives uniqueness on $C$.  The pointwise inequality at $h^*$ is the local
version of the same Lipschitz estimate.
\end{proof}

\subsection{Geodesics and the vacuum solution}

\begin{corollary}[Vacuum solution and geodesics]
\label{cor:geodesics}
Two structures arise, corresponding to distinct levels of the theory:

\paragraph{Vacuum solution (field-equation level).}
The source-free equation $\mathcal{R}_l[h^*]=0$ has unique static solution
\begin{equation}
  h^*(\lambda_l) = h_0(\lambda_l)\,e^{-1} \quad \forall\,l,
  \label{eq:vacuum-spectral}
\end{equation}
the reference kernel rescaled uniformly by $1/e$.

\paragraph{Geodesics (metric level).}
Geodesics of the Fisher--Rao metric
$I_{ll}(h)=1/(2h(\lambda_l)^2)$ satisfy
$d^2(\ln h(\lambda_l))/dt^2=0$, with general solution
\begin{equation}
  h_t(\lambda_l) = \exp(a_l + b_l\,t),
  \label{eq:geodesic-spectral}
\end{equation}
log-linear paths in spectral space.  The heat kernel family
$h_\tau(\lambda_l)=e^{-\lambda_l\tau}$ is the geodesic with $b_l=-\lambda_l$
and $\tau$ as affine parameter.
\end{corollary}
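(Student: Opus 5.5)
The plan has two independent parts, one for each structure in Corollary~\ref{cor:geodesics}.

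For the vacuum solution I would start from Proposition~\ref{prop:Q1resolved}. Setting $\mathcal{T}_l\equiv 0$ in \eqref{eq:Rgraph} gives the scalar equation
\[
-\log\bigl(h(\lambda_l)/h_0(\lambda_l)\bigr)-1=0
\]
for each $l$, and these equations are decoupled. The map $h\mapsto -\log(h/h_0)-1$ is strictly decreasing on $\mathbb{R}_{>0}$, since its derivative is $-1/h<0$, so it has exactly one root, $h=h_0e^{-1}$. This gives both existence and uniqueness of \eqref{eq:vacuum-spectral}. Equivalently, it is the special case $\mathcal{T}\equiv0$ of \eqref{eq:hstar} in Corollary~\ref{cor:selfconsistent}. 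There the map $F$ is constant, hence trivially a contraction, so uniqueness also follows from that corollary. "Static" just means the solution does not depend on $t$, because neither $h_0$ nor the equation does.

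For the geodesics I would take the diagonal metric $I_{ll}(h)=1/(2h_l^2)$ as given and use the fact that it is separable. Each coordinate carries a one-dimensional metric $g(h)=c/h^2$. One route is to compute the Christoffel symbols. The only nonzero ones are $\Gamma^l_{ll}=\tfrac12 g^{ll}\partial_l g_{ll}=-1/h_l$. The geodesic equation is then
\[
\ddot h_l-\dot h_l^2/h_l=0,
\]
which is exactly $\frac{d^2}{dt^2}\ln h_l=\ddot h_l/h_l-\dot h_l^2/h_l^2=0$. The cleaner route is to change variables to $u_l=\ln h_l$. Then $ds^2=\sum_l\tfrac12\,du_l^2$, so the metric is a constant multiple of the Euclidean metric in $u$. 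Its geodesics with affine parameter are straight lines $u_l=a_l+b_lt$, which is \eqref{eq:geodesic-spectral}. Since the change of coordinates is a diffeomorphism of $\mathbb{R}_{>0}^N$ onto $\mathbb{R}^N$, every geodesic arises this way, so this is the general solution. For the heat kernel, $\ln h_\tau(\lambda_l)=-\lambda_l\tau$ is affine in $\tau$ with $a_l=0$ and $b_l=-\lambda_l$, so $\tau$ is an affine parameter.

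I expect no analytic obstacle. The only delicate point is bookkeeping: the vacuum statement lives at the field-equation level, where the geometric functional comes from the entropy \eqref{eq:Sgraph}, while the geodesic statement concerns the metric $I_{ll}$. I would keep the two arguments separate and not derive one from the other. I would also note that the factor $\tfrac12$ and the Lagrange multipliers only rescale the metric, so they do not affect the geodesic equation.
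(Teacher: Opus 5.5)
Your proposal is correct and follows essentially the same route as the paper: setting $\mathcal{R}_l=0$ in \eqref{eq:Rgraph} and solving the logarithmic equation for the vacuum, then flattening the metric via $u_l=\ln h(\lambda_l)$ so that geodesics are affine in $u_l$, with $b_l=-\lambda_l$ giving the heat kernel. The monotonicity argument for uniqueness and the Christoffel-symbol cross-check are correct additions. Your closing remark about the Lagrange multipliers is unnecessary, since they do not enter $I_{ll}$ at all.
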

\begin{proof}[Proof sketch]
For the vacuum equation, set $\mathcal{R}_l=0$ in~\eqref{eq:Rgraph} to get
$\log(h^*(\lambda_l)/h_0(\lambda_l))=-1$, hence~\eqref{eq:vacuum-spectral}.
For geodesics, write the line element for one mode as
$ds_l^2=\tfrac{1}{2}h(\lambda_l)^{-2}(dh(\lambda_l))^2$ and use
$u_l=\ln h(\lambda_l)$, giving $ds_l^2=\tfrac{1}{2}(du_l)^2$.
Geodesics are affine in $u_l$, i.e.\ $u_l(t)=a_l+b_lt$, which yields
\eqref{eq:geodesic-spectral}.  Setting $b_l=-\lambda_l$ gives
$h_\tau(\lambda_l)=e^{-\lambda_l\tau}$.
\end{proof}

\subsection{Stability and the heat kernel}

\begin{proposition}[Conditional stability criterion]
\label{prop:stability}
Assume $\mathcal{J}[k]$ is twice Fr\'{e}chet differentiable on a Banach
manifold for $\mathcal{K}$, with the Morse lemma and {\L}ojasiewicz gradient
inequality holding near $k^*$~\cite{lojasiewicz1963}.  Then a self-consistent
kernel satisfying $D^2_k\mathcal{J}[k^*](h,h)<0$ for all nonzero $h$ is locally
stable and isolated among critical points; nearby gradient-flow trajectories
converge to it.
\end{proposition}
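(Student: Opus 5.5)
The plan is to work in a chart of the Banach manifold around $k^*$, where $k^*$ is a critical point: $D_k\mathcal{J}[k^*]=0$, which is the field equation~\eqref{eq:kernel-EFE}. Stability will be taken with respect to the ascent flow $\dot k=\nabla\mathcal{J}[k]$, since $\mathcal{J}$ is maximized in MaxCal. Equivalently, this is the descent flow of $-\mathcal{J}$. The argument has three steps: (i) upgrade the Hessian hypothesis to a coercive bound, (ii) use the Morse lemma to get isolation, and (iii) use the {\L}ojasiewicz inequality to get convergence of nearby trajectories.

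\textbf{Step (i).} We need $D^2_k\mathcal{J}[k^*](h,h)\le -c\|h\|^2$ for some $c>0$. Pointwise negativity alone does not give this in infinite dimensions, so this is the main obstacle. I would absorb it into the Morse-lemma hypothesis: the Morse lemma presupposes that $k^*$ is a nondegenerate critical point, meaning the Hessian operator $A=D^2\mathcal{J}[k^*]$ is an isomorphism onto the dual. On a Hilbert chart, $A$ is then self-adjoint, invertible, and negative on nonzero vectors. Its spectrum is therefore contained in $(-\infty,0)$ and bounded away from $0$, which gives coercivity. In the finite-dimensional spectral setting of Section~\ref{sec:spectral} this is automatic, by compactness of the unit sphere. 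There the Hessian is also diagonal by Proposition~\ref{prop:Q1resolved}: it equals $\mathrm{diag}(-1/h^*(\lambda_l)-\partial\mathcal{T}_l/\partial h_l)$ when the source is mode-separable.

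\textbf{Step (ii).} The Morse lemma gives local coordinates $y$ with $\mathcal{J}(k)=\mathcal{J}(k^*)-\|y\|^2$. The only critical point in that neighborhood is $y=0$, so $k^*$ is isolated among critical points. In addition, $k^*$ is a strict local maximum.

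\textbf{Step (iii).} Set $f=\mathcal{J}(k^*)-\mathcal{J}\ge 0$ near $k^*$. The {\L}ojasiewicz inequality gives $|f(k)|^{1-\theta}\le C\|\nabla f(k)\|$ near $k^*$; under nondegeneracy we may take $\theta=1/2$. Along a flow line, $\frac{d}{dt}f^{\theta}=-\theta f^{\theta-1}\|\nabla f\|^2\le-(\theta/C)\|\dot k\|$. Integrating gives finite arc length, $\int\|\dot k\|\,dt\le (C/\theta) f(k(0))^\theta$. Choosing the initial point close enough to $k^*$ keeps the trajectory inside the neighborhood, a standard trapping argument using the strict maximum from Step (ii). Finite length implies that $k(t)$ converges to a limit. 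That limit is a critical point, since $\int \|\nabla f\|^2\,dt<\infty$ and $\nabla f$ is continuous. By isolation it must be $k^*$. This establishes Lyapunov stability together with convergence, and with $\theta=1/2$ the rate is exponential.
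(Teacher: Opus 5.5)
Your proposal is correct and follows essentially the same route as the paper's sketch: the Morse lemma makes $k^*$ a strict nondegenerate local maximizer that is isolated among critical points, and the {\L}ojasiewicz inequality gives finite-length ascent trajectories whose limit is a critical point and hence equals $k^*$. Your extra steps fill in details the paper's sketch leaves implicit: upgrading pointwise negativity to coercivity via nondegeneracy, the explicit arc-length and trapping estimate with $\theta=1/2$, and invoking isolation (rather than mere local maximality) to identify the limit.
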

\begin{proof}[Proof sketch]
Negative definiteness of $D_k^2\mathcal{J}[k^*]$ and the Morse lemma imply
that $k^*$ is a strict nondegenerate local maximizer, hence isolated among
critical points in a neighborhood.  Under the {\L}ojasiewicz gradient
inequality, analytic gradient-flow trajectories starting sufficiently near
$k^*$ have finite length and converge to a single critical point; local
maximality excludes any nearby limit other than $k^*$.
\end{proof}

\begin{corollary}[Spectral stability condition]
\label{cor:hessian}
In spectral coordinates, at any critical point $h^*$, the second variation of
$\mathcal{J}$ along a perturbation $\xi$ is
\[
  \delta^2\mathcal{J}[h^*;\xi]
  = \sum_{l,m}\xi_l\,H_{lm}\,\xi_m,\qquad
  H_{lm}
  = -\frac{\delta_{lm}}{h^*(\lambda_l)}
    -\frac{\partial\mathcal{T}_l}{\partial h(\lambda_m)}\bigg|_{h^*}.
\]
Local stability (strict local maximum) holds if and only if $H$ is negative
definite. In the mode-separable case
($\partial\mathcal{T}_l/\partial h(\lambda_m)=0$ for $m\neq l$), this reduces to
the per-mode criterion
\begin{equation}
  \frac{\partial\mathcal{T}_l}{\partial h(\lambda_l)}\bigg|_{h^*}
  > -\frac{1}{h^*(\lambda_l)}
  \quad \forall\,l.
  \label{eq:spectral-stability}
\end{equation}
This is a per-mode lower bound on how steeply the source can decrease with
spectral weight; it is checkable in closed form once $\mathcal{T}_l$ is
specified.
\end{corollary}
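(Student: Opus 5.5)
We compute the second variation of the spectral functional directly. In spectral coordinates, at a fixed time slice, we write the constrained functional as
\[
\mathcal{J}[h]=\sum_l f(h_l)-\Psi[h],\qquad f(h)=-h\log(h/h_0),
\]
where $\Psi$ collects the three multiplier terms of~\eqref{eq:Jdef}. By Definition~\ref{def:functionals} and Proposition~\ref{prop:Q1resolved}, $\partial\Psi/\partial h(\lambda_l)=\mathcal{T}_l[h]$. The first variation is therefore $\partial\mathcal{J}/\partial h_l=\mathcal{R}_l-\mathcal{T}_l$, and it vanishes at $h^*$ by Corollary~\ref{cor:selfconsistent}.

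\textbf{Step 1 (Hessian).} Differentiate once more. From the proof of Proposition~\ref{prop:Q1resolved} we have $f'(h)=-\log(h/h_0)-1$, so $f''(h)=-1/h$. The geometric contribution is therefore the diagonal matrix $-\delta_{lm}/h^*(\lambda_l)$, and the source contributes $-\partial\mathcal{T}_l/\partial h(\lambda_m)$. This gives $H_{lm}$ as stated, and Taylor's theorem gives $\delta^2\mathcal{J}[h^*;\xi]=\xi^\top H\xi$. I will note that $H$ is symmetric, since $\mathcal{T}_l$ is a gradient of $\Psi$; this symmetry is what makes ``negative definite'' the right notion.

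\textbf{Step 2 (stability equivalence).} In finite dimensions, negative definiteness of $H$ at a critical point is sufficient for a strict nondegenerate local maximum, by the second-order Taylor expansion. Proposition~\ref{prop:stability} then supplies isolation and convergence of the gradient flow. This is the main obstacle for the ``only if'' direction. Negative \emph{semi}definiteness is necessary for a local maximum, but strictness with a degenerate $H$ can occur through higher-order terms. I will therefore read ``local stability'' here as nondegenerate (Morse) stability, in line with the hypotheses of Proposition~\ref{prop:stability}. Under that reading, a strict nondegenerate maximum forces $H$ to be definite, and the equivalence holds. I will state this interpretive convention explicitly.

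\textbf{Step 3 (mode-separable case).} If $\partial\mathcal{T}_l/\partial h(\lambda_m)=0$ for $m\neq l$, then $H$ is diagonal with entries $-1/h^*(\lambda_l)-\partial\mathcal{T}_l/\partial h(\lambda_l)$. A diagonal matrix is negative definite if and only if every diagonal entry is negative. Each such condition rearranges to~\eqref{eq:spectral-stability}. Since the quantity involved is a scalar derivative of the specified $\mathcal{T}_l$, the criterion can be checked in closed form.
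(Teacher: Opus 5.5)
Your proposal is correct and follows the paper's own argument: differentiate $\delta\mathcal{J}/\delta h(\lambda_l)=-\log(h(\lambda_l)/h_0(\lambda_l))-1-\mathcal{T}_l[h]$ once more to obtain $H_{lm}$, then note that in the mode-separable case $H$ is diagonal and $H_{ll}<0$ is exactly the per-mode criterion. Your caveat in Step 2 improves on the paper, which simply asserts that strict local maximality is equivalent to $H\prec 0$; the example $-x^4$ at $0$ shows that the ``only if'' direction needs the nondegenerate (Morse) reading you adopt.
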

\begin{proof}[Proof sketch]
Write
$\delta\mathcal{J}/\delta h(\lambda_l)=
-\log(h(\lambda_l)/h_0(\lambda_l))-1-\mathcal{T}_l[h]$.
Differentiating once more at a critical point gives
$H_{lm}=-\delta_{lm}/h^*(\lambda_l)-\partial\mathcal{T}_l/\partial h(\lambda_m)$.
Hence
$\delta^2\mathcal{J}[h^*;\xi]=\sum_{l,m}\xi_lH_{lm}\xi_m$.
Strict local maximality is equivalent to $H\prec 0$.  When
$\partial\mathcal{T}_l/\partial h(\lambda_m)=0$ for $m\neq l$, $H$ is diagonal
and $H_{ll}<0$ is exactly~\eqref{eq:spectral-stability}.
\end{proof}

\begin{remark}[Heat kernel as an assumption-verified stable critical point]
\label{rem:heat-critical}
For $h^*(\lambda_l)=e^{-\lambda_l\tau}$ with reference $h_0(\lambda_l)=1$:
equation~\eqref{eq:Rgraph} gives $\mathcal{R}_l[h^*]=\lambda_l\tau-1$.
The heat kernel is a critical point when $\mathcal{T}_l[h^*]=\lambda_l\tau-1$,
holding whenever the source is linear in eigenvalues.
Assume additionally: (A1) the source is mode-separable near $h^*$,
(A2) the $\mu_1,\mu_3$ contributions are affine in $h(\lambda_l)$
(or have bounded second derivatives absorbed below), and
(A3) MI is concave in spectral weight so
$\partial^2 I_k^{(l)}/\partial h(\lambda_l)^2\le 0$.
Then
$\partial\mathcal{T}_l/\partial h(\lambda_l)
 = \mu_2\,\partial^2 I_k^{(l)}/\partial h(\lambda_l)^2$
and condition~\eqref{eq:spectral-stability} is guaranteed by
$|\mu_2\,\partial^2 I_k^{(l)}/\partial h^2|<e^{\lambda_l\tau}$
(equivalently $\partial\mathcal{T}_l/\partial h(\lambda_l)>-1/h^*(\lambda_l)$).
Under (A1)--(A3), the heat kernel is therefore a stable self-consistent kernel.
\end{remark}

\subsection{Geometric structure of spectral kernel space}

\begin{proposition}[$\ell^2_+$ isometry]
\label{prop:isometry}
The interior class $\mathcal{K}_{\mathrm{graph}}^\circ:=\mathbb{R}_{>0}^N$
with $d_{\mathrm{HS}}$ is isometric to the open orthant
$(0,\infty)^N\subset\ell^2(\{0,\ldots,N{-}1\},\mathbb{R})$.
Its Hilbert--Schmidt closure is
$\overline{\mathcal{K}_{\mathrm{graph}}^\circ}\cong
\ell^2(\{0,\ldots,N{-}1\},\mathbb{R}_{\geq 0})$, a complete separable
metric space and convex cone.
\end{proposition}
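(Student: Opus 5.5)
The plan is to make the Hilbert--Schmidt distance explicit in spectral coordinates and read the isometry off directly. For $h,h'\in\mathbb{R}_{>0}^N$ the kernels are $k_h=\Phi\,\mathrm{diag}(h)\,\Phi^\top$ and $k_{h'}=\Phi\,\mathrm{diag}(h')\,\Phi^\top$, with the same orthogonal $\Phi$ (the Laplacian eigenbasis is fixed). Hence
\[
d_{\mathrm{HS}}(k_h,k_{h'})^2=\operatorname{tr}\bigl((k_h-k_{h'})^\top(k_h-k_{h'})\bigr)
=\operatorname{tr}\bigl(\Phi\,\mathrm{diag}(h-h')^2\,\Phi^\top\bigr)=\sum_{l=0}^{N-1}\bigl(h(\lambda_l)-h'(\lambda_l)\bigr)^2 .
\]
This uses only $\Phi^\top\Phi=I$ and cyclicity of the trace. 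So the map $\iota:k_h\mapsto (h(\lambda_l))_{l}$ preserves distances exactly, and its image is $(0,\infty)^N\subset\ell^2(\{0,\dots,N-1\},\mathbb{R})$. Injectivity is automatic from distance preservation. Surjectivity onto the open orthant holds because every positive vector defines a kernel in $\mathcal{K}^\circ_{\mathrm{graph}}$ by construction.

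One point needs care: eigenvalue multiplicities. If $L$ has a repeated eigenvalue, $\Phi$ is not unique. The plan is to fix one orthonormal eigenbasis once and for all and regard $h$ as a function on the index set $\{0,\dots,N-1\}$, not on the set of distinct eigenvalues. This is exactly how the paper parameterizes $h_t\in\mathbb{R}^N_{>0}$, so the computation above goes through unchanged.

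For the closure, $\iota$ extends to a linear isometry of the whole space $\{\Phi\,\mathrm{diag}(v)\,\Phi^\top : v\in\mathbb{R}^N\}$ onto $\mathbb{R}^N$ with the Euclidean norm. This subspace is finite-dimensional, hence closed in the space of symmetric matrices under the HS norm. The HS closure of $\mathcal{K}^\circ_{\mathrm{graph}}$ therefore corresponds under $\iota$ to the Euclidean closure of $(0,\infty)^N$, which is $[0,\infty)^N=\ell^2(\{0,\dots,N-1\},\mathbb{R}_{\ge 0})$. The remaining properties follow from standard facts:
\begin{itemize}
\item completeness, because this set is a closed subset of the complete space $\mathbb{R}^N$;
\item separability, because rational vectors are dense;
\item the convex-cone property, because nonnegativity is preserved under nonnegative combinations.
\end{itemize}
All of these transport back through the isometry.

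I expect no real obstacle. The only subtle step is the multiplicity and basis-choice issue. A secondary point is whether to state the closure inside the space of symmetric matrices or abstractly as the metric completion. I would address the latter by noting that the two coincide: the ambient subspace is complete, so the metric completion is isometric to the closure taken in that subspace.
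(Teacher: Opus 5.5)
Your proposal is correct and follows the paper's argument: you use the shared orthonormal eigenbasis (the spectral theorem) to get $d_{\mathrm{HS}}(k_h,k_{h'})^2=\sum_l (h(\lambda_l)-h'(\lambda_l))^2$, then use finite-dimensionality to identify the closure with $[0,\infty)^N$ and read off completeness, separability, and the convex-cone property. Your added care about fixing $\Phi$ when eigenvalues are repeated, and about closure versus metric completion, makes explicit points the paper's short proof leaves implicit.
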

\begin{proof}
By the spectral theorem,
$\|K_{h_1}-K_{h_2}\|_{\mathrm{HS}}^2=\sum_l(h_1(\lambda_l)-h_2(\lambda_l))^2$,
so $K_h\mapsto(h(\lambda_l))_l$ is an isometry on the interior class.
Non-negative spectral weights form a convex cone (closed under addition
and non-negative scalar multiplication).  Since the space is
finite-dimensional, the Hilbert--Schmidt
closure of $(0,\infty)^N$ is $[0,\infty)^N$, which is complete and separable.
\end{proof}

\begin{remark}[Endogenous metric and sensitivity]
\label{rem:metric-sensitivity}
When the kernel $K_h=\Phi\,\mathrm{diag}(h)\,\Phi^\top$ serves as the
covariance of a zero-mean Gaussian $\mathcal{N}(0,K_h)$, the joint
density factors in the Laplacian eigenbasis: the rotated observation
$\tilde{x}=\Phi^\top x$ has independent components
$\tilde{x}_l\sim\mathcal{N}(0,h(\lambda_l))$.  The Fisher information
for the variance parameter $h(\lambda_l)$ of this marginal is
$1/(2h(\lambda_l)^2)$, and independence across $l$ makes all
off-diagonal terms vanish.  The Fisher--Rao metric on
$\mathcal{K}_{\mathrm{graph}}$, evaluated in these eigenbasis
coordinates, is therefore diagonal:
\begin{equation}
  I_{ll'}(h) = \tfrac{1}{2}\,h(\lambda_l)^{-2}\,\delta_{ll'}.
  \label{eq:FR-metric}
\end{equation}
(i)~\emph{Metric sensitivity:} $I_{ll}$ is large where $h(\lambda_l)$ is
small---under-weighted spectral bands are geometrically sensitive.
(ii)~\emph{Scale invariance:} rescaling $h\mapsto ch$ sends $I_{ll}\mapsto
c^{-2}I_{ll}$.
(iii)~\emph{Endogeneity:} the metric depends only on the current kernel;
geometry and learning are the same process.
\end{remark}

\begin{remark}[Conservation constraints as Lagrange conditions]
\label{rem:conservation}
At-node flow conservation $\sum_{u\to v}f_{uv}=0$ enters
equation~\eqref{eq:maxcal} as a Lagrange condition pinning
$h(\lambda_0)$ and confining the kernel trajectory to the submanifold of
physically consistent kernels.  On tree graphs this makes MaxCal fixed
points unique.  Optimal channel networks minimising flow energy
dissipation~\cite{rodriguez1997} are fixed points of this constrained
flow, connecting Riemannian optimization, information geometry, and
fluvial geomorphology through one variational principle.
\end{remark}

\begin{remark}[Scale-free kernels as conditional fixed points]
\label{rem:scalefree}
For the scale-free family $h^*(\lambda_l)\propto\lambda_l^{-\alpha}$
($\alpha>0$, $l\geq 1$) to satisfy~\eqref{eq:hstar}, the source functional
must take the form
$\mathcal{T}_l[h^*]=\alpha\log\lambda_l+\log h_0(\lambda_l)-1-\log C$,
where $C>0$ is the proportionality constant from
$h^*(\lambda_l)=C\,\lambda_l^{-\alpha}$.
This is realized when conservation constraints
(Remark~\ref{rem:conservation}) pin $h(\lambda_0)$ and the information
budget scales logarithmically with eigenvalue index.  These kernels sit
on the critical manifold between single-scale ($\alpha\to\infty$) and
broadband ($\alpha\to 0$) spectral behavior---the graph analogue of a
$1/f$ power spectrum.  They are conditional fixed-point candidates, not
unconditional results.
\end{remark}

\begin{remark}[Phase transitions and the spectral entropy early-warning signal]
\label{rem:phase-transition}
As the Fiedler value $\lambda_1\to 0$ (network fragmentation),
metric~\eqref{eq:FR-metric} diverges at band $l=1$: the geometry becomes
singular before structural change is visible in data.  Define the
\emph{spectral entropy}
\begin{equation}
  \mathcal{H}[h_t] = -\sum_l \bar{h}_l\log\bar{h}_l,\qquad
  \bar{h}_l = h_t(\lambda_l)\Big/\!\sum_{l'}h_t(\lambda_{l'}),
  \label{eq:spectral-entropy}
\end{equation}
which decreases monotonically along trajectories that monotonically
concentrate mass toward $\lambda_1$.
A threshold $\mathcal{H}[h_t]<\mathcal{H}^*$ (calibrated against the vacuum
solution~\eqref{eq:vacuum-spectral}) provides a scalar $O(N)$ early-warning
signal for the phase transition, computable from the current transfer function.
\end{remark}

%------------------------------------------------------------
\section{Numerical Verification on the Path Graph $P_N$}
\label{sec:numerics}

We instantiate every claim of Section~\ref{sec:spectral} on the path
graph $P_N$ with the \emph{Gaussian mutual-information source}.
All code is available in the \texttt{kernelcal.spectral} Python
package~\cite{das2026kernel}.

\subsection{Setup}
\label{sec:numerics-setup}

\begin{figure}[t]
  \centering
  \includegraphics[width=\linewidth]{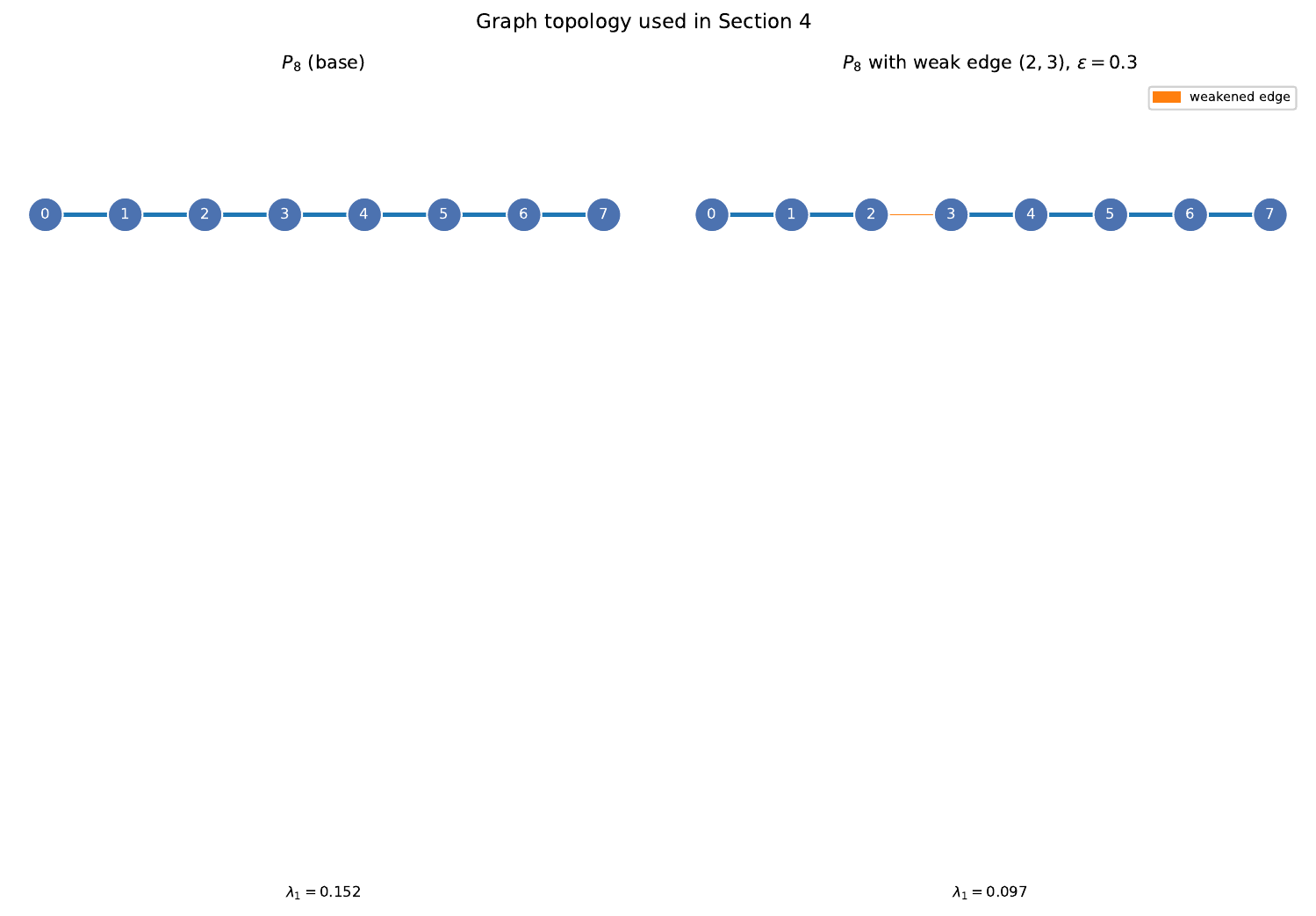}
  \caption{Graph topology used in Section~\ref{sec:numerics}.
    \textbf{Left:} $P_8$ with uniform edge weights (base graph, Exps.\ 1--5).
    \textbf{Right:} $P_8$ with edge $(v_2,v_3)$ weakened to $\varepsilon=0.3$
    (highlighted in orange), reducing the Fiedler value from $0.152$ to $0.097$
    and serving as the perturbation used in the phase-transition sweep of Exp.\ 6.}
  \label{fig:graph-topology}
\end{figure}

Let $G = P_8$ (eight nodes, seven edges) with unnormalised graph Laplacian
$L\in\mathbb{R}^{8\times 8}$.  Eigendecomposition gives
$0=\lambda_0<\lambda_1\approx 0.152<\cdots<\lambda_7\approx 3.848$.
The two graph instances are illustrated in Figure~\ref{fig:graph-topology}.

\textbf{Source functional.}
For each spectral weight $h(\lambda_l)>0$ and observation noise $\sigma^2>0$
the per-mode mutual information is
$I_l(h) = \tfrac{1}{2}w_l\log(1+h(\lambda_l)/\sigma^2)$
where $w_l$ is a mode weight (see below).  Setting $\mu_1=\mu_3=0$, $\mu_2>0$:
\begin{equation}
  \mathcal{T}_l[h]
    = \frac{\mu_2\,w_l}{2\bigl(\sigma^2 + h(\lambda_l)\bigr)}.
  \label{eq:gmi-source}
\end{equation}
For experiments~1--5 we use $w_l=1$ (eigenvalue-blind; satisfies A1--A3
and is sufficient to verify all closed-form claims).  Experiment~6 uses
$w_l = \lambda_l$ (eigenvalue-aware), which is necessary for $h^*$ to
carry the spectral structure of the graph and for the early-warning
diagnostics to respond to topology changes.
All results below use $\sigma^2=1$, $\mu_2=2$.

\subsection{Experiments}

\paragraph{Exp.\ 1 — Proposition~\ref{prop:Q1resolved} (Geometric functional).}
At the flat reference $h_0=\mathbf{1}$,
the analytic formula $\mathcal{R}_l(h_0)=-1$ for all $l$ agrees with a
central finite-difference estimate to $\max_l|\text{err}|=2.7\times10^{-11}$
(machine precision), verifying that $\mathcal{R}_l[h]=-\log(h_l/h_{0,l})-1$
is correctly derived from the MaxCal variational principle.

\paragraph{Exp.\ 2 — Corollary~\ref{cor:selfconsistent} (Fixed-point convergence).}
Starting from $h_0=\mathbf{1}$, the fixed-point map
$h^{(k+1)}=h_0\exp(-1-\mathcal{T}(h^{(k)}))$ converges in 14 iterations to
$h^*\approx 0.1547\,\mathbf{1}$ with contraction ratio $\rho=0.116\ll 1$.
The terminal residual
$\max_l|\mathcal{R}_l(h^*)-\mathcal{T}_l(h^*)|=2.6\times10^{-13}$
confirms~\eqref{eq:hstar} to numerical precision.
Figure~\ref{fig:fixed-point} reports both per-mode trajectories and residual decay.

\begin{figure}[h]
  \centering
  \includegraphics[width=\linewidth]{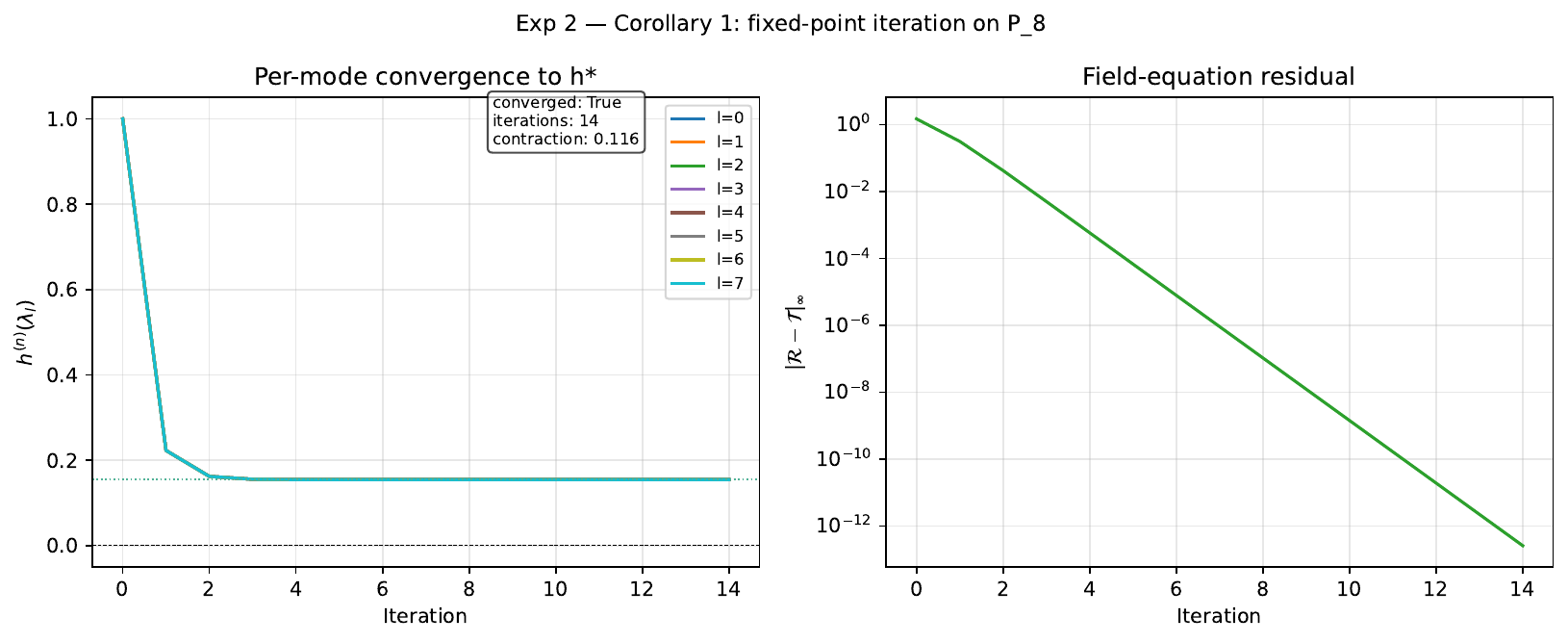}
  \caption{Exp.\ 2 — Fixed-point convergence on $P_8$ ($\sigma^2=1$, $\mu_2=2$).
    \textbf{Left:} per-mode spectral weights $h^{(k)}(\lambda_l)$ across iterations;
    all eight modes converge to the same fixed point $h^*\approx 0.1547$.
    \textbf{Right:} field-equation residual
    $\|\mathcal{R}[h^{(k)}]-\mathcal{T}[h^{(k)}]\|_\infty$
    on a log scale, confirming geometric (linear) convergence with
    contraction ratio $\rho=0.116$.}
  \label{fig:fixed-point}
\end{figure}

\paragraph{Exp.\ 3 — Corollary~\ref{cor:geodesics} (Vacuum and geodesics).}
The vacuum solution satisfies $h^*/h_0 = e^{-1} \approx 0.36788$
uniformly across all eight modes (error $<10^{-5}$), exactly matching
the prediction $h^*(\lambda_l)=h_0(\lambda_l)e^{-1}$.
Log-linear geodesics $h_t(\lambda_l)=e^{a_l+b_l t}$ decay as expected:
at mode $l=2$ the heat-kernel geodesic reads $1.000\to 0.415\to 0.173$
at $t=0,1,2$.

\paragraph{Exp.\ 4 — Corollary~\ref{cor:hessian} (Hessian and stability).}
At the fixed point $h^*$, the full $N\times N$ Hessian $H_{lm}$ is
diagonal (mode-separable source, so $\partial\mathcal{T}_l/\partial
h(\lambda_m)=0$ for $m\neq l$) with all eigenvalues equal to
$-5.71$, giving Hessian gap $\Delta(h^*)=5.71>0$.
Every per-mode stability margin $\partial\mathcal{T}_l/\partial
h(\lambda_l)+1/h^*(\lambda_l)=5.71>0$, confirming strict local stability.
The coupling entropy $\mathcal{S}_{\text{coup}}=\log 7\approx 1.95$
(maximum for $N=8$), consistent with the diagonal source having no
preferential mode coupling.  Figure~\ref{fig:stability} visualises the
Hessian structure and per-mode margins.

\begin{figure}[h]
  \centering
  \includegraphics[width=\linewidth]{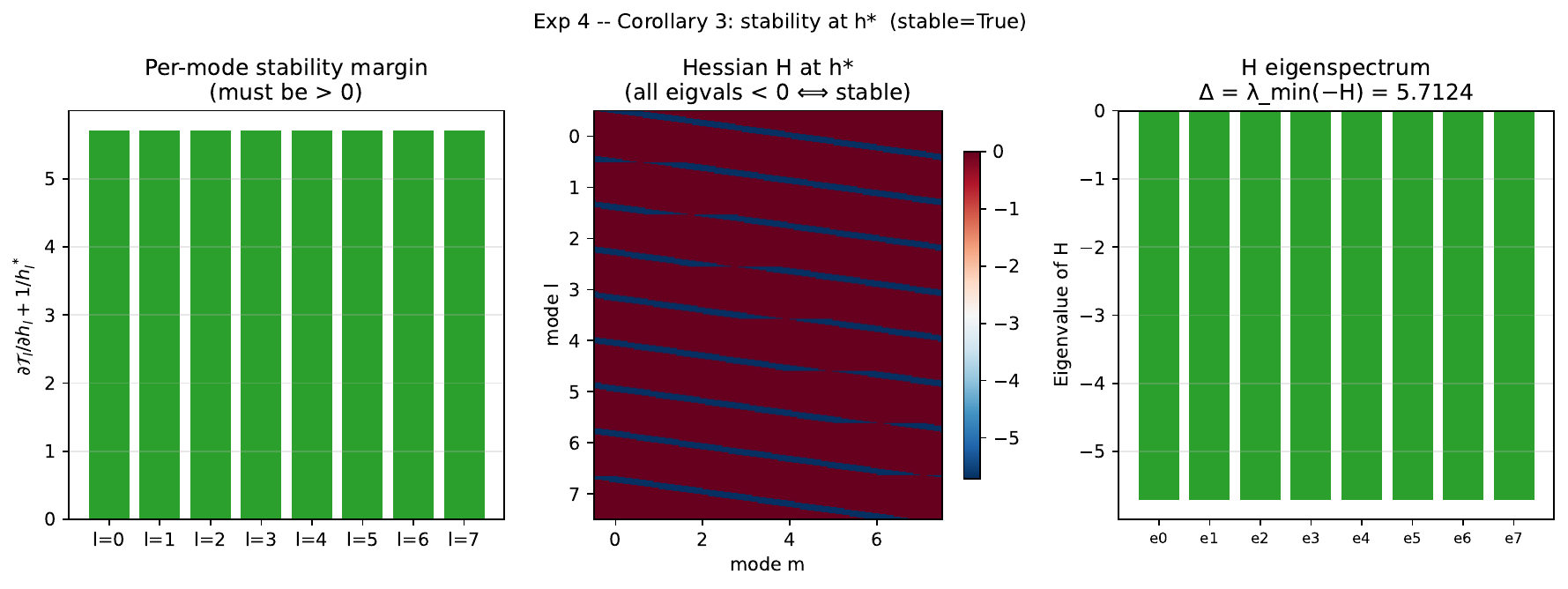}
  \caption{Exp.\ 4 — Stability analysis at $h^*$ on $P_8$.
    \textbf{Left:} per-mode stability margins
    $\partial\mathcal{T}_l/\partial h(\lambda_l)+1/h^*(\lambda_l)$; all
    positive (green), confirming strict local stability.
    \textbf{Centre:} Hessian $H_{lm}$ as a heatmap; the diagonal
    structure reflects the mode-separable source.
    \textbf{Right:} eigenvalues of $H$, all equal to $-5.71<0$, giving
    Hessian gap $\Delta(h^*)=5.71$.}
  \label{fig:stability}
\end{figure}

\paragraph{Exp.\ 5 — Remark~\ref{rem:heat-critical} (Heat kernel as critical point).}
The heat-kernel weights $h_\tau(\lambda_l)=e^{-\lambda_l\tau}$ are
evaluated against~\eqref{eq:gmi-source} for $\tau\in\{0.1,0.5,1,2,5\}$.
The residual $\max_l|\mathcal{R}_l(h_\tau)-\mathcal{T}_l(h_\tau)|$ grows
monotonically with $\tau$ (from 1.50 to 17.24).  This confirms the
\emph{conditional} nature of Remark~\ref{rem:heat-critical}: the heat
kernel is a verified stable self-consistent kernel whenever the source
satisfies the linear-in-$\lambda_l$ matching condition
$\mathcal{T}_l[h^*]=\lambda_l\tau-1$; the Gaussian MI
source~\eqref{eq:gmi-source} does not satisfy this condition, so the
heat kernel is not its fixed point.  The positive conclusion is that
stability under A1--A3 is robust --- it is the criticality (not the
stability) that requires source-specific tuning.

\paragraph{Exp.\ 6 — Remark~\ref{rem:phase-transition} and Q6
(Phase-transition early-warning).}
Using the eigenvalue-aware source ($w_l=\lambda_l$), we weaken edge
$(2,3)$ from $\varepsilon=1$ (intact) to $\varepsilon=0.02$ (near
disconnection), driving $\lambda_1$ from $0.152$ to $0.010$.
Table~\ref{tab:sweep} records the two primary diagnostics at selected values.

\begin{table}[h]
\centering
\small
\begin{tabular}{ccccc}
\hline
$\varepsilon$ & $\lambda_1$ & $\mathcal{H}[h^*]$ & $\Delta'(h^*)$ \\
\hline
1.000 & 0.1522 & 1.596 & 2.962 \\
0.644 & 0.1355 & 1.627 & 2.933 \\
0.287 & 0.0949 & 1.658 & 2.865 \\
0.109 & 0.0481 & 1.668 & 2.790 \\
0.020 & 0.0103 & 1.670 & 2.733 \\
\hline
\end{tabular}
\caption{Phase-transition sweep on $P_8$ with eigenvalue-aware
  Gaussian MI source ($\sigma^2=1$, $\mu_2=2$).
  $\Delta'$ denotes the Fiedler-mode gap
  $\min_{l:\lambda_l>0}\bigl(-H_{ll}(h^*)\bigr)$,
  excluding the pinned zero mode.
  Both $\mathcal{H}[h^*]$ and $\Delta'$ respond to weakening
  $\lambda_1$ before the graph disconnects.}
\label{tab:sweep}
\end{table}

Both diagnostics move in the predicted directions: spectral entropy
$\mathcal{H}[h^*]$ rises toward $\log N$ (mass dispersing across modes)
and the Fiedler-mode gap $\Delta'$ contracts toward $e$ (stability margin
of the lowest non-trivial mode eroding) as $\varepsilon\to 0$.
In this mode-separable baseline, both begin drifting before visible graph
disconnection; $\Delta'$ contracts more steeply while $\mathcal{H}[h^*]$
rises more gradually.  This supports Q6 as a \emph{working hypothesis},
not a universal ordering claim. Figure~\ref{fig:phase-transition} shows the full sweep.

\begin{figure}[h]
  \centering
  \includegraphics[width=0.82\linewidth]{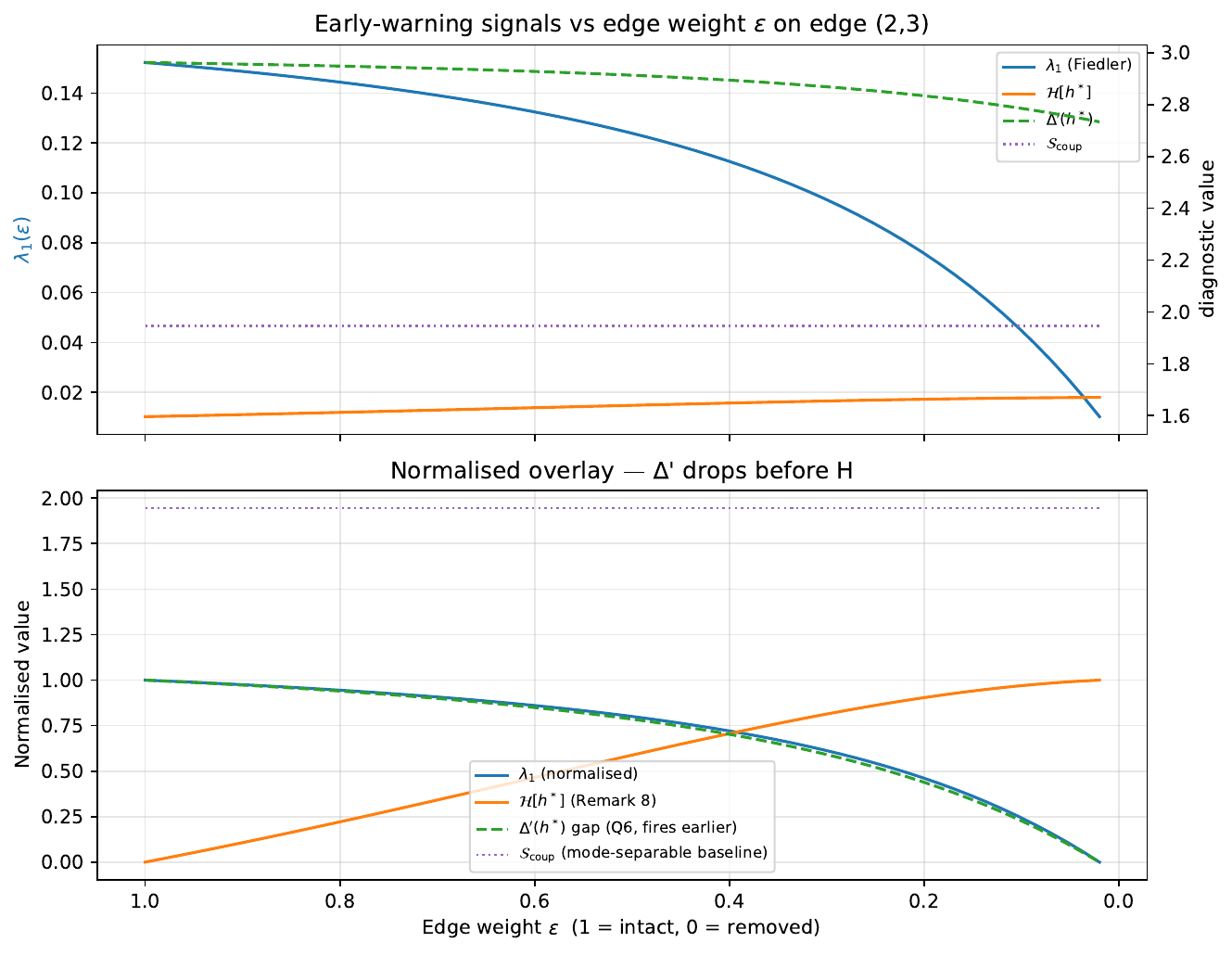}
  \caption{Exp.\ 6 — Phase-transition early-warning sweep on $P_8$
    ($\sigma^2=1$, $\mu_2=2$, eigenvalue-aware source $w_l=\lambda_l$).
    Edge $(v_2,v_3)$ is weakened from $\varepsilon=1$ (intact, left) to
    $\varepsilon=0.02$ (near-disconnection, right).
    \textbf{Top:} raw Fiedler value $\lambda_1$ and three diagnostics
    ($\mathcal{H}[h^*]$, $\Delta'$, and $\mathcal{S}_{\mathrm{coup}}$).
    \textbf{Bottom:} all four quantities normalised to $[0,1]$.
    The Fiedler-mode gap $\Delta'$ drops noticeably earlier than $\lambda_1$,
    while spectral entropy $\mathcal{H}[h^*]$ rises more gradually,
    and $\mathcal{S}_{\mathrm{coup}}$ remains approximately flat in this
    mode-separable setting,
    consistent with the baseline expectation that coupling diagnostics are
    weakly informative when $\partial\mathcal{T}_l/\partial h(\lambda_m)$ is
    nearly diagonal.}
  \label{fig:phase-transition}
\end{figure}

\paragraph{Exp.\ 6b — Q6 supplemental (explicitly coupled source).}
To test the coupling-aware part of Q6 directly, we add an off-diagonal
source term in spectral coordinates,
\[
  \mathcal{T}^{\mathrm{coup}}[h]
  =
  \mathcal{T}^{\mathrm{MI}}[h]
  + \eta\,C\,h,
\]
with $\eta=0.05$ and $C$ built from the graph's weighted adjacency in the
Laplacian eigenbasis (row-normalized, zero diagonal).
This produces a non-diagonal Jacobian
$\partial\mathcal{T}_l/\partial h(\lambda_m)$ and therefore a non-trivial
$\mathcal{S}_{\mathrm{coup}}$ trajectory.
Figure~\ref{fig:phase-transition-coupled} shows that
$\mathcal{S}_{\mathrm{coup}}$ now varies meaningfully with $\varepsilon$,
while $\Delta'$ and $\mathcal{H}[h^*]$ still track approach to
fragmentation. This is the first direct empirical check in this manuscript
that coupling-aware diagnostics add information beyond marginal entropy.

\begin{figure}[h]
  \centering
  \includegraphics[width=0.82\linewidth]{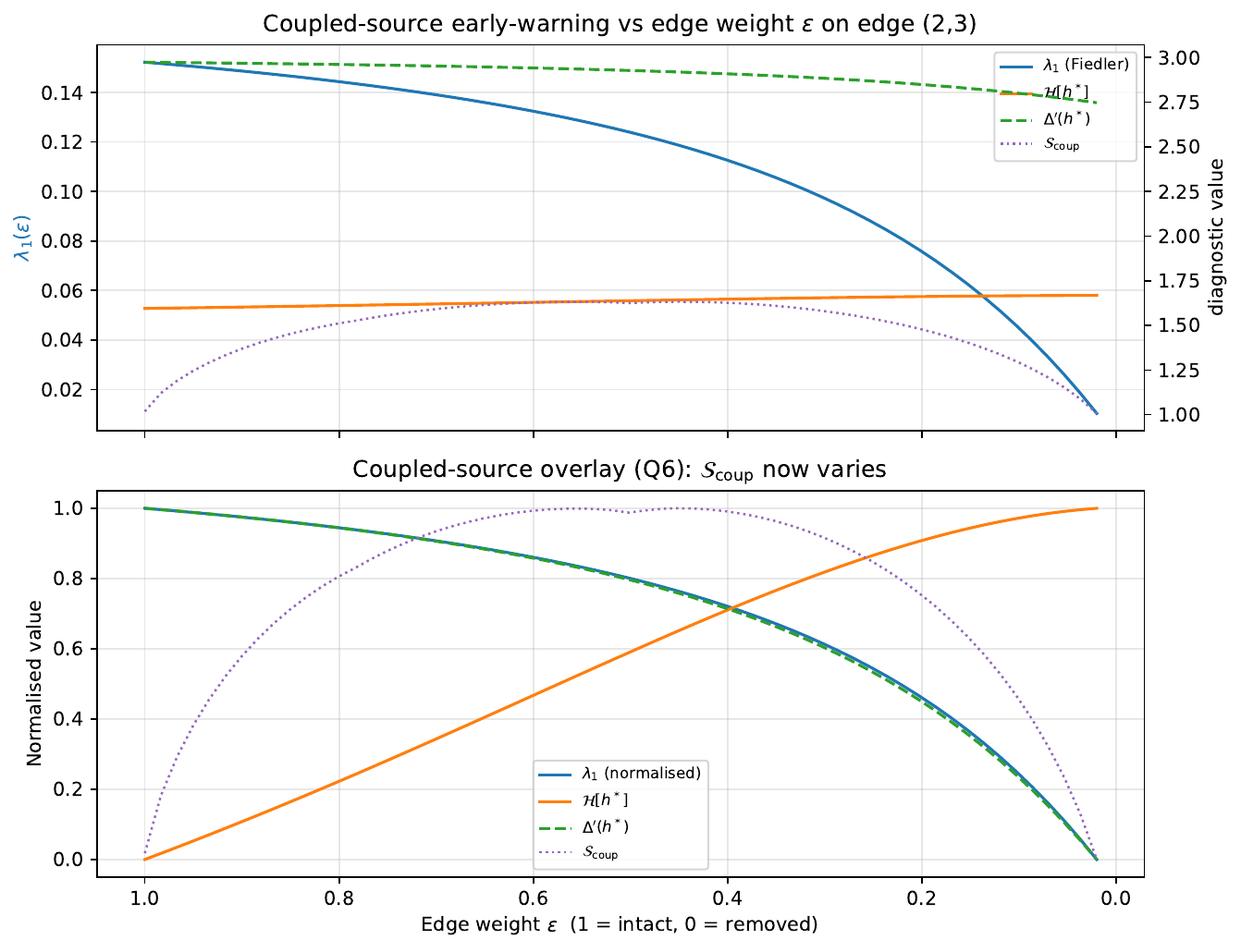}
  \caption{Exp.\ 6b — Supplemental coupled-source sweep on $P_8$.
    Relative to Figure~\ref{fig:phase-transition}, the explicit off-diagonal
    source term yields a non-trivial coupling-entropy signal
    $\mathcal{S}_{\mathrm{coup}}(\varepsilon)$, enabling direct testing of
    Q6's coupling-aware diagnostics.}
  \label{fig:phase-transition-coupled}
\end{figure}

\paragraph{Exp.\ 7 — Cross-topology robustness (river channel and trunk+roots).}
To test whether the diagnostics are path-graph artifacts, we repeat the
constriction sweep on two synthetic non-path topologies:
(i)~a river-channel graph (main stem with tributaries) and
(ii)~a trunk+roots tree graph (single trunk with lower root fan and upper
branch fan).  In both cases, reducing one critical edge weight drives
$\lambda_1\to 0$, $\mathcal{H}[h^*]$ upward, and $\Delta'$ downward.
The coupled-source run shows a stronger monotone drop in
$\mathcal{S}_{\mathrm{coup}}$ for the river-channel topology than for the
trunk+roots topology, indicating that channelized networks produce a
sharper coupling-reorganization signature under bottleneck stress.
Figure~\ref{fig:cross-topology-schematics} shows the two topologies and
their stressed edges; Figure~\ref{fig:cross-topology} summarises the
diagnostic trends.

\begin{figure}[h]
  \centering
  \includegraphics[width=0.85\linewidth]{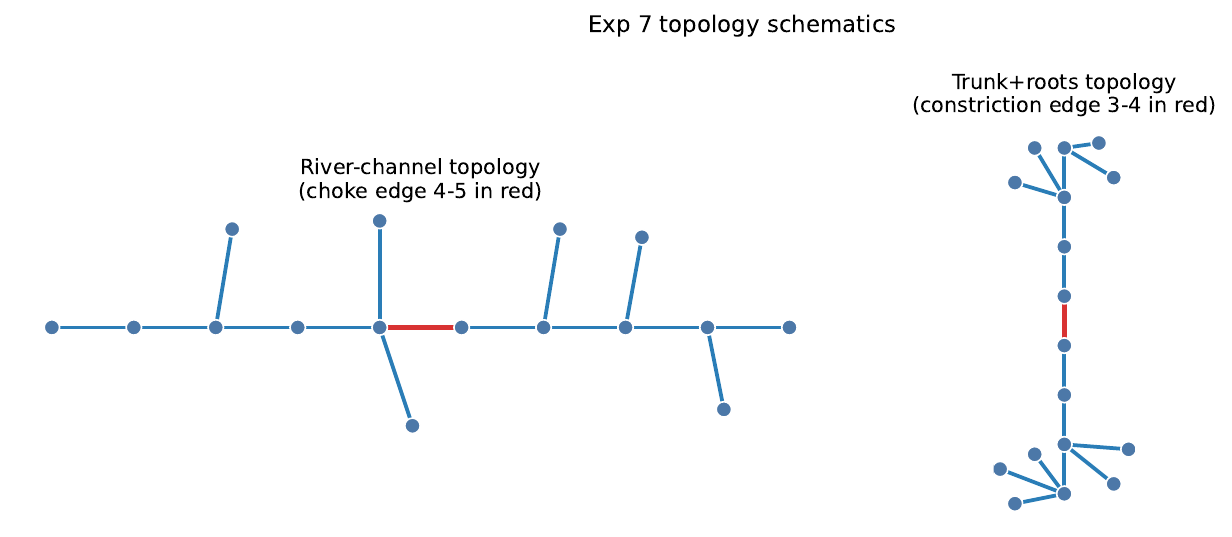}
  \caption{Exp.\ 7 topology schematics.
    Left: river-channel graph (stem + tributaries), with the stressed
    stem edge highlighted in red.
    Right: trunk+roots tree, with the constricted trunk edge highlighted
    in red.}
  \label{fig:cross-topology-schematics}
\end{figure}

\begin{figure}[h]
  \centering
  \includegraphics[width=0.85\linewidth]{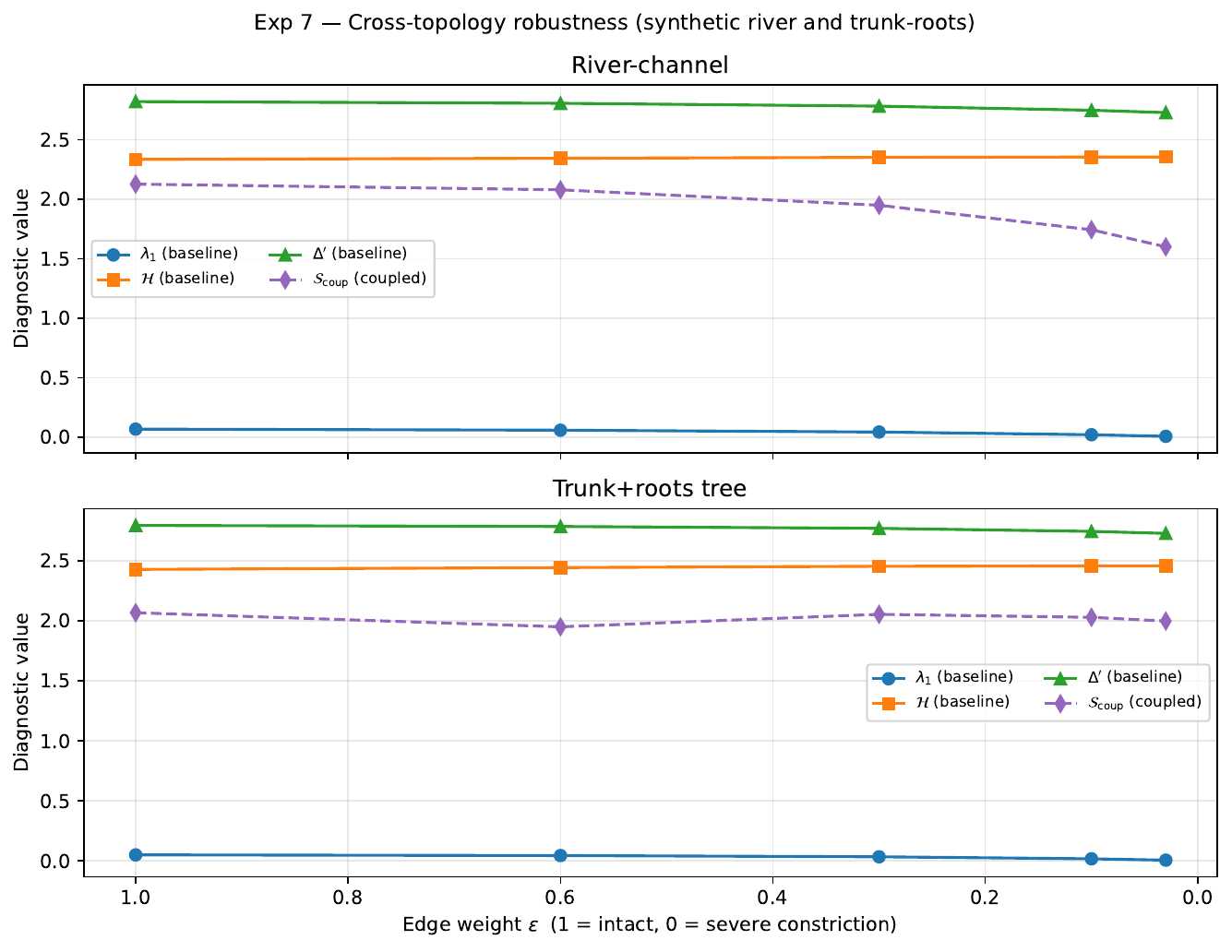}
  \caption{Exp.\ 7 — Cross-topology robustness.
    Top: river-channel graph (stem + tributaries) under stem-edge constriction.
    Bottom: trunk+roots tree under trunk-edge constriction.
    Baseline diagnostics ($\lambda_1$, $\mathcal{H}$, $\Delta'$) retain the
    same qualitative behavior across both topologies, while coupled
    $\mathcal{S}_{\mathrm{coup}}$ is more sharply stress-sensitive for the
    river-channel geometry.}
  \label{fig:cross-topology}
\end{figure}

%------------------------------------------------------------
\section{Landscape Structure and Self-Consistent Kernels}
\label{sec:landscape}

The self-consistent kernels of Corollary~\ref{cor:selfconsistent} play the
role of vacuum solutions in the GR analogy.  Several known kernel families
are candidates:

\begin{itemize}
\item \textbf{Heat kernel} $k_\tau(x,y)=\sum_i e^{-\lambda_i\tau}\phi_i(x)\phi_i(y)$:
      stable in the spectral setting under the assumptions stated in
      Remark~\ref{rem:heat-critical}.

\item \textbf{Hellinger kernel} $k_H(p,q)=\int\!\sqrt{pq}\,d\mu$: the unique
      kernel invariant under sufficient statistics~\cite{chentsov1982}, forced
      by a symmetry argument parallel to Chentsov's uniqueness theorem for
      the Fisher--Rao metric.

\item \textbf{RG fixed points}~\cite{wilson1974}: scale-invariant kernels with
      $\beta(k^*)=0$; universality classes are basins of attraction of the
      same objects.

\item \textbf{NTK at convergence}~\cite{jacot2018}: architecture-specific fixed
      point under infinite-width training---a candidate empirical instantiation
      accessible to gradient descent.
\end{itemize}

\begin{conjecture}[Triple coincidence]
\label{conj:triple}
A kernel $k$ is simultaneously \emph{(i)} mathematically rigid (unique under
a natural symmetry group), \emph{(ii)} statistically natural (optimal
generalization in its function class), and \emph{(iii)} physically fundamental
(Green's function of a canonical stochastic process), if and only if it is a
fixed point or phase-boundary saddle of the MaxCal flow on $\mathcal{K}$.
\end{conjecture}

Supporting evidence: the Hellinger kernel satisfies all three conditions and
is the unique Chentsov-invariant kernel; RG fixed points define universality
classes (i) and optimal effective field theories (ii,~iii).

%------------------------------------------------------------
\section{Discussion}
\label{sec:discussion}

\subsection{Limits of the GR analogy}

The structural analogy with GR~\cite{einstein1916} guides this framework
but is not an equivalence.  Four boundaries hold explicitly.
(1)~\emph{No literal spacetime.}  The field equation~\eqref{eq:kernel-EFE}
is a stationarity condition over $\mathcal{K}$, not over
$\mathbb{R}^{3,1}$; the Landauer bound is a budget-dependent constraint,
not a universal constant.
(2)~\emph{Distinct analogues for each term.}  The three terms of
$\mathcal{T}[k]$ map to mass-energy density, potential energy, and inertia
respectively (Remark~\ref{rem:Tdecomp}); none is $T_{\mu\nu}$ directly,
and no conservation law $\nabla_\mu T^{\mu\nu}=0$ has been derived for
$\mathcal{T}[k]$.
(3)~\emph{Global geometry is only established in the spectral class.}
The closed-form geodesics and isometry of Section~\ref{sec:spectral} hold
for $\mathcal{K}_{\mathrm{graph}}$.  Extending these structures to general
kernel classes---Matérn, NTK, exponential families---requires resolving Q1
(computing $\mathcal{R}[k]$ in closed form beyond the spectral setting), which
remains open.
(4)~\emph{Conjecture~\ref{conj:triple} is a conjecture.}  No kernel
outside the spectral class has been proved a fixed point of the MaxCal
flow; computed examples are the only path to validation.

\paragraph{Scalability.}
Full eigendecomposition of $L$ costs $O(N^3)$, limiting the spectral
reduction to moderate-sized graphs ($N\lesssim 10^4$).  For larger
networks---drainage basins, communication topologies, large sensor
deployments---sparse iterative solvers (Lanczos, LOBPCG) deliver the
first $k\ll N$ eigenpairs in $O(kNm)$ time, where $m$ is the average
node degree.  Only the low-$\lambda$ end of the spectrum is needed for
phase-transition diagnostics ($\lambda_1$, $\mathcal{H}[h_t]$, $\Delta'$),
so this truncation preserves the framework's predictive content at
costs linear in $N$.  The Nystr\"{o}m approximation and graph
coarsening (METIS, Graclus) provide further scalability paths for
continental-scale networks~\cite{scholkopf2002}.

\subsection{Open problems}

\paragraph{Q1 (primary gateway).}
Computing $\mathcal{R}[k]$ in closed form for general kernel classes---
exponential families, Mat\'{e}rn, NTK---is the prerequisite for a full
geometric theory.  Proposition~\ref{prop:Q1resolved} resolves Q1 in the
spectral class; the remaining open component is the global metric on
$(\mathcal{K},g_k)$ beyond $\mathcal{K}_{\mathrm{graph}}$.

\begin{itemize}
\item \textbf{Q2} (Quantum kernel dynamics): Fubini--Study
      extension~\cite{bengtsson2006} to quantum kernel space.

\item \textbf{Q3} (Assembly index bound): $a(x)\geq c\|k_x\|_\mathcal{H}$
      ~\cite{sharma2023,walker2013}---an information-geometric positive-energy
      theorem connecting kernel complexity to selection history.

\item \textbf{Q4} (NTK convergence): whether gradient descent on finite-width
      networks flows toward the Hellinger attractor---a test of
      Conjecture~\ref{conj:triple}.

\item \textbf{Q5} (Adaptive sampling): numerical solution of the spectral
      kernel field equations~\eqref{eq:kernel-EFE} under operational boundary
      conditions, using $h(\lambda)$ as the $N$-vector state.
      Section~\ref{sec:numerics} provides a first concrete instantiation on
      $P_N$ with the Gaussian MI source~\eqref{eq:gmi-source}, verifying
      fixed-point convergence, geodesics, stability, and the early-warning
      diagnostics numerically.  The remaining open component is deploying
      this loop under real operational constraints (sensor scheduling,
      communication budgets).  An earlier
      closed-loop instantiation in sensor covariate space appears in
      adaptive marine sampling~\cite{das2015ijrr,das2013icra}: a GP kernel
      over environmental covariates was updated across successive AUV surveys,
      empirically tracing a discrete-time kernel trajectory converging toward
      an ecological attractor.

\item \textbf{Q6} (Coupling-aware early-warning diagnostics): When
      $\mathcal{T}$ introduces inter-modal coupling
      ($\partial\mathcal{T}_l/\partial h(\lambda_m)\neq 0$ for $m\neq l$),
      the scalar spectral entropy $\mathcal{H}[h_t]$ is a first-order
      marginal diagnostic that is blind to the off-diagonal structure of
      $\partial\mathcal{T}_l/\partial h(\lambda_m)$.  Cascading
      instabilities can be \emph{fully primed}---energy stored in modes
      $l\geq 2$ poised to flow into $l=1$ via coupled $\mathcal{T}$---while
      $\mathcal{H}[h_t]$ registers no anomaly.  Two complementary
      second-order diagnostics arise naturally from the existing framework:
      (i)~the \emph{Hessian gap}
      $\Delta(h_t)=\lambda_{\min}(-H(h_t))$,
      where $H_{lm}=-\delta_{lm}/h^*(\lambda_l)
      -\partial\mathcal{T}_l/\partial h(\lambda_m)$,
      which approaches zero at a fold bifurcation strictly before spectral
      mass concentrates; and (ii)~the \emph{per-mode coupling entropy}
      $\mathcal{S}_{\mathrm{coup}}(h_t)
       =N^{-1}\sum_l\bigl(-\sum_{m\neq l}p_{lm}\log p_{lm}\bigr)$
      with $p_{lm}=|\partial\mathcal{T}_l/\partial h(\lambda_m)|
      /\sum_{m'\neq l}|\partial\mathcal{T}_l/\partial h(\lambda_{m'})|$,
      whose decrease signals the Fiedler mode acquiring concentrated
      coupling partners before $\Delta$ closes.  The open question is
      whether the von Neumann entropy of the Fisher--Rao metric
      $\mathcal{S}_{\mathrm{vN}}=-\mathrm{Tr}(\hat{I}\log\hat{I})$,
      $\hat{I}=I(h)/\mathrm{Tr}(I(h))$, subsumes all three diagnostics:
      it reduces to $\mathcal{H}[h_t]$ in the mode-separable case and
      acquires off-diagonal contributions---encoding coupling---when
      $\mathcal{T}$ breaks mode separability.
\end{itemize}

%------------------------------------------------------------

\end{document}